\newtheorem{theorem}{Theorem}
\crefname{section}{Sec.}{Secs.}
\Crefname{section}{Section}{Sections}
\Crefname{table}{Table}{Tables}
\crefname{table}{Tab.}{Tabs.}
\begin{document}

\title{Embedding Watermarks in Diffusion Process for \\Model Intellectual Property Protection}


\author{ Jijia Yang \hspace{3em} Sen Peng \hspace{3em} Xiaohua Jia\\
 City University of Hong Kong \\
 {\tt\small \{jijiayang2-c, senpeng2-c\}@my.cityu.edu.hk,csjia@cityu.edu.hk}
}

\maketitle

\begin{abstract}
In practical application, the widespread deployment of diffusion models often necessitates substantial investment in training. 
As diffusion models find increasingly diverse applications, concerns about potential misuse highlight the imperative for robust intellectual property protection. 
Current protection strategies either employ backdoor-based methods, integrating a watermark task as a simpler training objective with the main model task, or embedding watermarks directly into the final output samples. 
However, the former approach is fragile compared to existing backdoor defense techniques, while the latter fundamentally alters the expected output. 
In this work, we introduce a novel watermarking framework by embedding the watermark into the whole diffusion process, and theoretically ensure that our final output samples contain no additional information. 
Furthermore, we utilize statistical algorithms to verify the watermark from internally generated model samples without necessitating triggers as conditions. 
Detailed theoretical analysis and experimental validation demonstrate the effectiveness of our proposed method.
\end{abstract}

\section{Introduction}
\label{sec:intro}

In the rapid development of AI, we all have witnessed the widespread influence of Diffusion Models (DMs). Nowadays, diffusion models have been applied in various applications such as text-to-image and image-to-image generation. Models such as Stable Diffusion \cite{rombach2022high}, DallE-2 \cite{ramesh2022hierarchical}, and Imagen \cite{saharia2022photorealistic} have demonstrated the bright future of DMs not only in academic filed but also in the industrial world.

It is an important task to protect the model copyrights, as organizations and individuals have heavily invested in developing and applying these models. Unauthorized replication, reverse engineering, or utilization of the proprietary features of diffusion models poses a significant threat to the model owners. Model watermarking, a widely utilized method for protecting Intellectual Property (IP) in deep learning models \cite{li2021survey,chen2019deepmarks,uchida2017embedding,nagai2018digital,adi2018turning,guo2018watermarking,zhang2018protecting,rouhani2019deepsigns}, has been introduced into generative models, including DMs \cite{liu2023watermarking,peng2023intellectual,lei2024diffusetrace,fernandez2023stable,wen2024tree} and GANs \cite{qiao2023novel,ong2021protecting}. However, existing research on IP protection for diffusion models is still limited. Due to the specific constraints underlying diffusion models, specialized design is required to safeguard the diffusion model's intellectual property.

Existing methods for embedding watermarks into models either inject a trigger-based watermark into the model through backdoor attack \cite{qiao2023novel,liu2023watermarking,chou2023backdoor,chen2023trojdiff}, or add a specific pattern in the outputs \cite{kirchenbauer2023watermark,lei2024diffusetrace,fernandez2023stable,desu2024generative,wen2024tree}. The backdoor-based watermarking techniques inject a specific watermark task associated with a trigger-input into the model training process. In this way, the watermark task is designed as an independent and simple task compared to the main task so that the model can learn it easily while maintaining the performance on the main task. However, the backdoor-based watermark can be easily removed using backdoor-removing techniques. As for the schemes that add the pattern directly into the outputs, the distribution of generation results is modified, even though the difference is claimed to be invisible and shows little influence to some metrics.

In this paper, we propose a novel watermarking technique for diffusion models. Unlike traditional approaches, our technique embeds the watermark during the training stage. It can be verified in the intermediate stages of the model's sampling process and theoretically proved that the quality of the final output is not affected. Different from backdoor-based watermarking techniques, the watermark task and main task of the DM are inseparable due to our specially designed training process, and thus the watermark cannot be removed easily without affecting the model's main task. We first explain the principles of our watermarking method, then we present our design and implementation. Through extensive experiments conducted on benchmark datasets, we demonstrate the effectiveness of our approach. Our results show that the watermarking technique does not degrade the performance of the diffusion model and can be clearly verified without relying on backdoor-based methods.

In conclusion, our work addresses a critical need in model IP protection by providing a practical and effective solution for embedding watermarks into diffusion models. This method theoretically guarantees image fidelity while protecting intellectual property, paving the way for more secure and legally robust applications of AI technologies. Our contribution
can be summarized as follows:
\begin{itemize}
    \item We propose an innovative watermarking method for diffusion models that embeds the watermark during the intermediate training process. This watermark can be verified at the intermediate stages of the sampling process to protect model copyrights.
    \item We provide a theoretical foundation to ensure that our method does not affect the final sampling results of the model and does not require special sampling procedures for watermark extraction. This enhances the credibility and reliability of the watermark.
    \item We conduct extensive experiments on different benchmark datasets to validate the effectiveness of our method.
\end{itemize}

\section{Related Works}
\label{sec:related_work}

\subsection{Diffusion Probabilistic Models}
Diffusion Probabilistic Models are generative models that simulate diffusion processes to achieve high-quality image generation. 
The training and sampling processes rely on forward and backward diffusion processes, respectively. 
The iterative image generation process of diffusion models from noise helps maintain local coherence, which makes it widely used in data generation tasks today. 
There are also some more efficient designs of diffusion models in recent years besides Denoising Diffusion Probabilistic Models (DDPMs) \cite{ho2020denoising}. 
The latent diffusion model \cite{rombach2022high} is the fundamental work of Stable Diffusion. 
Denoising diffusion implicit model \cite{song2020denoising} is a more efficient class of iterative implicit probabilistic models with the same training procedure as DDPMs. 
\cite{yang2023diffusion} provides spectral diffusion, which reduces the computational complexity compared to the latent diffusion model. 
Our work focuses on embedding a watermark into the entire forward diffusion process. As a result, our method is a generic method for diffusion models.

\subsection{Watermarking Discriminative Models}
Watermarking discriminative models can be classified into static watermarking and dynamic watermarking \cite{li2021survey}. 
The static methods \cite{chen2019deepmarks,uchida2017embedding,nagai2018digital} generally embed the watermark message into those model weights, which are fixed during the training phase. 
On the contrary, the dynamic methods \cite{adi2018turning,guo2018watermarking,zhang2018protecting,rouhani2019deepsigns} relies on the model's input. 
In these approaches that are based on backdoor attacks, the model learns correlations between the specific input patterns (triggers or keys) and predefined outputs. 
Inputs embedded with triggers will lead to specific behaviors of the model, revealing the watermark information without affecting the performance of the main task. 
Although the motivations and application scenarios differ from those of backdoor attacks, their similarity still enables the possibility of watermark removal techniques based on some backdoor-removing methods \cite{aiken2021neural,wang2019neural}. 
Moreover, due to differences in model principles, watermarking techniques for discriminative models do not apply to DMs.

\subsection{Watermarking Generative Models}
Compared to discriminative models, embedding and extracting watermarks in diffusion models is a more complex process. It is worth noting that due to the fundamental differences between Generative
Adversarial Networks (GANs) \cite{goodfellow2020generative} and DMs, some watermarking techniques suitable for GANs are not applicable to DMs. Similarly, prompt backdoor-based watermarking methods for conditional DMs are also not applicable to unconditional DMs.

Existing works, including watermarking generative models \cite{qiao2023novel,liu2023watermarking,peng2023intellectual} and backdoor techniques \cite{chou2023backdoor,chen2023trojdiff}, can build a special mapping between trigger inputs (i.e., prompts or initial noise) with the target image. However, embedding triggers can potentially impact the model's training process or generation capabilities. Furthermore, trigger-based watermarking techniques are not transparent to end-users, which could raise ethical and legal concerns in certain situations. Moreover, watermark removal methods, including adversarial training, can be easy and effective if the triggers are leaked.

Different from the above approaches, the methods in \cite{kirchenbauer2023watermark,lei2024diffusetrace,fernandez2023stable,desu2024generative,wen2024tree} protect the intellectual property of generative models by embedding watermarks into all generated outputs mostly through influencing the generating process. Although some of these watermarks are claimed to be invisible, the generated samples are still modified in principle. Under certain specific conditions, this impact may be non-negligible. For instance, when these generated samples are incorporated into training medical models, assessing their influence on model decision-making behavior compared to expected watermark-free samples becomes challenging. 
We propose a novel watermarking technique for model IP protection to address the weaknesses of existing technologies. To the best of our knowledge, we are the first to embed the watermark into the intermediate diffusion process, and our theory ensures a deep binding between the watermark task and the main task while maintaining image fidelity in principle.
\section{Preliminary and Problem Definition}

\subsection{Diffusion Process}

The diffusion process for DMs includes a forward process and a reverse process. In our work, we adopt the classic implementation DDPM \cite{ho2020denoising}. In the forward process, DMs gradually add Gaussian noise onto data samples in a specific manner. For a given sample $x_0$, the diffusion model first calculates $x_t$ according to \cref{eqa:ddpm_noising},
\begin{equation}\label{eqa:ddpm_noising}
    {x_t} = \sqrt{\alpha_t}{x_{t-1}} + \sqrt{1-\alpha_t}\epsilon = \sqrt{\bar{\alpha}_t}{x_{0}} + \sqrt{1-\bar{\alpha}_t}{\epsilon_t},
\end{equation}
where $\alpha_t = 1-\beta_t$ and $\beta_t\in (0,1)$ represents the variance schedule. In this equation, $\epsilon \sim N(0,1)$ and $\bar{\alpha}_t$ denotes $\prod_{i=1}^t \alpha_i$. Then the model learns the noise $\epsilon$ with timestep $t$ and diffused sample $x_t$. In the reverse process, DMs can gradually produce samples that conform to the distribution of the training set from random Gaussian noise. For a given $x_t$, $x_{t-1}$ can be approximately reversed by sampling from $q({x_{t-1}} | {x_t},{x_0}) \sim N(x_{t-1};\mu,\sigma^2)$, where $\sigma^2$ and $\mu$ are given in \cref{eqa:ddpm_sigma} and \cref{eqa:ddpm_mu}. The predicted $\epsilon_t^{pred}$ is used to estimate $\epsilon_t$ in generating process.
\begin{equation}\label{eqa:ddpm_sigma}
    {\sigma}^2 = \frac{(1-\alpha_t)(1-\bar{\alpha}_{t-1})}{1-\bar{\alpha}_t}
\end{equation}
\begin{equation}\label{eqa:ddpm_mu}
    \mu = \frac{1}{\sqrt{\alpha_t}} {x_t} - \frac{1-\alpha_t}{\sqrt{1-\bar{\alpha}_t}\sqrt{\alpha_t}} \cdot {\epsilon_t}
\end{equation}

\subsection{Threat Model}

Unauthorized individuals or organizations may download the diffusion model's checkpoint without permission and exploit it for commercial use, violating the original author's copyright. Hence, safeguarding the copyright of model owners is paramount. 
In this paper, we aim to develop a method for safeguarding the copyright of diffusion models. Specifically, we address the scenario in which the model owner shares their trained diffusion models on an open-source platform, and an attacker attempts to claim model ownership for unauthorized purposes, such as commercial use. 
In our threat model, the attacker can achieve the model weights $\theta$ of the DM, and aim to claim the model ownership. At the same time, the defender can observe the intermediate result of the reverse process of the target DM. To achieve the goal, the defender can embed a specific watermark into the training process before publishing the DM. In our assumption, the defender can prove the ownership of the target DM without specific triggers or key inputs, including prompts and designed noise images.

\subsection{Problem Formulation}

For a given dataset $D$, each data sample $x \in D$ corresponds a sequence $\{x_t\}_{t\in[1,T]}$ in diffusion process, where $T$ denotes diffusion step. In the forward process, $x_t$ is generated by $x_0$ and $\epsilon \sim N(0,1)$, we intend to design an embedding algorithm $EB$ to embed watermark $x_A$ and algorithm $VF$ for verification. In our design, the watermark can be verified at a specific step $t_A$ in the reverse process and disappears at the final step. Generally, let $\theta$ denote model weights of DM, in the forward process, we get trained model weights by 
\begin{equation}
    \hat{\theta} \leftarrow Forward(\theta, EB(\epsilon_t,x_t,x_A)).
\end{equation}
We aim to verify $x_A$ in the reverse process by 
\begin{equation}
VF(x_A, Reverse(\hat{\theta},\epsilon_{random},t_A) \rightarrow \{\scriptstyle True, False\}.    
\end{equation}
Meanwhile, we also need to make sure 
\begin{equation}
    Reverse(\hat{\theta},\epsilon_{T},0) = x_0.
\end{equation}

\section{Embedding Watermarks in Intermediate Diffusion Process}

In light of the limitations of the existing works as discussed in \cref{sec:related_work}, we propose a copyright protection scheme that overcomes these challenges. Our approach embeds the watermark into the intermediate process and ensures its appearance solely during the intermediate stages of sample generation. Specifically, while sampling an image via the original denoising method, the watermark can be verified during intermediate processing stages and completely disappear for outputs. This preserves the quality of generated samples theoretically. Unlike existing methods, our scheme operates effectively without requiring triggers or a complicated watermark-extracting process.
\begin{figure}[!htbp]
\includegraphics[]{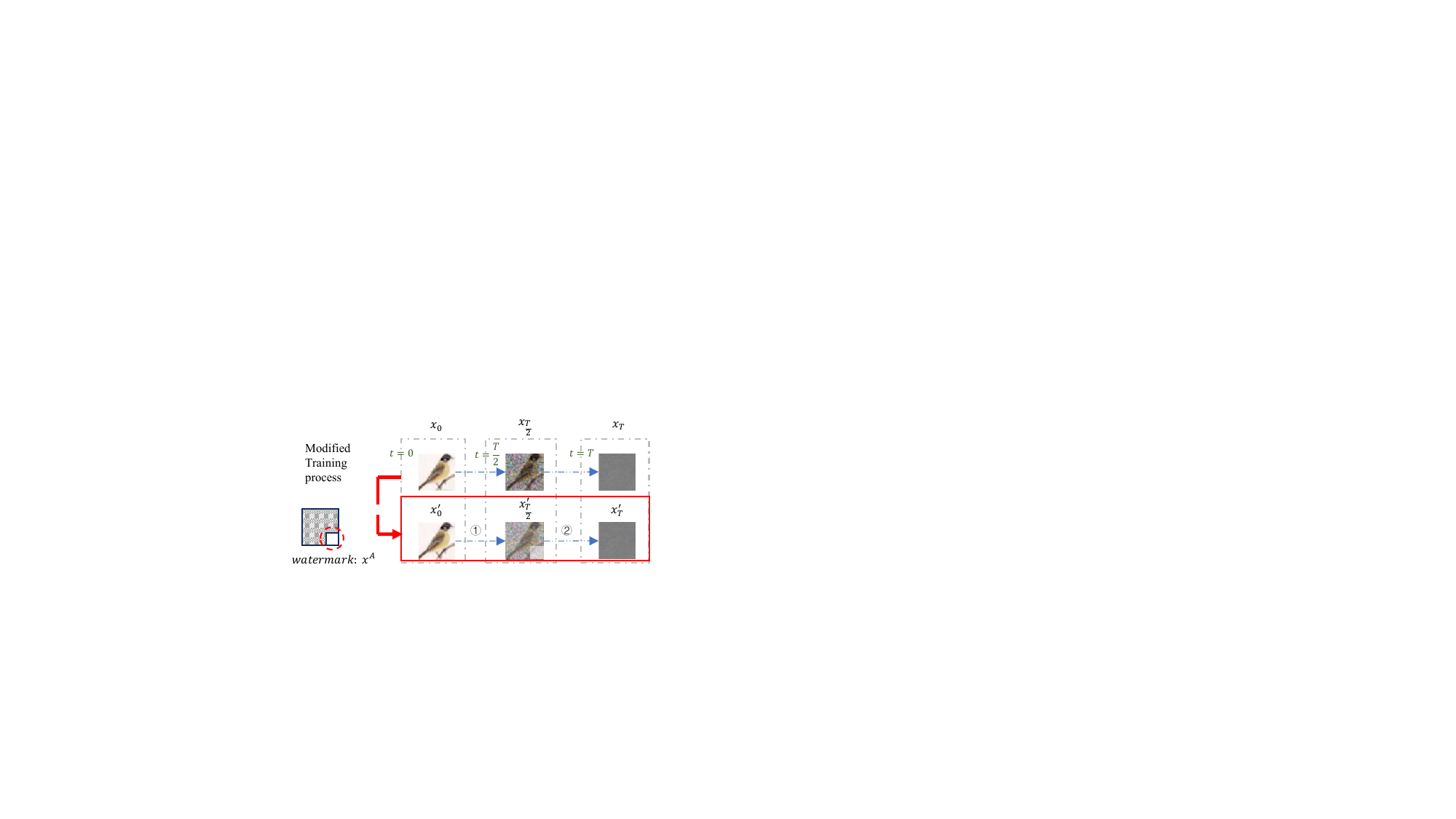}
\caption{Modified training process when watermark step $t_A = T/2$.} \label{fig:proposed_method}
\end{figure}

\subsection{Theoretical Analysis}

To avoid the disadvantages of the existing works, we present a novel watermarking scheme, as illustrated in \cref{fig:proposed_method}. In this scheme, the watermark is embedded by the model owner into intermediate rounds of the model's generation process and disappears in the final generated samples. To achieve our goal, we conduct a new noised sequence $\{{x_t}^\prime\}$ of $x_0$ instead of the original $\{x_t\}$ for the forward diffusion process in training. Given the preset watermark step $t_A$, we name the training stage when $t \in [1,t_A]$ as the embedding stage, and the rest when $t \in (t_A, T]$ as the simulation stage. Embedding stage conducts $\{{x_t}^\prime\}$ by \cref{theo:1} for watermark embedding, $\epsilon_t^\prime \sim N(0,1)$ is the noise we used to conduct image sequence and $\epsilon_t^{\prime\prime}$ is what the model need to learn. We simplify $\epsilon_t^{\prime\prime}$ for the consideration of convergence by \cref{theo:2}. The simulation stage is to simulate the original forward process to maintain ${x_T}^\prime$ still follows the distribution of $N(0,1)$. In the simulation stage, $\{{x_t}^\prime\}$ is given by: 
\begin{equation}\label{eqa:xt_prime_2}
x_t^\prime = \sqrt{\bar{\alpha}_t / \bar{\alpha}_{t_A}} * x_{t_A}^\prime + \sqrt{1-\bar{\alpha}_t / \bar{\alpha}_{t_A}} * \epsilon_t^\prime.
\end{equation}

\begin{theorem}\label{theo:1}
For samples $x_0 \sim q(x)$, $\{x_t\}_{t\in[1,t_A]}$ represents the sequence of the noised samples processed by DM, a sequence $\{b_t\}$, $\gamma \in (0,1]$ and 
\begin{equation}\label{eqa:xt_prime}
     {x_t}^\prime = \gamma x_t + (1-\gamma) b_t,
\end{equation} for $t\in [0,t_A]$: the corresponding conditional probability $q({x_{t-1}}^\prime | {x_t}^\prime,{x_0}^\prime)$ is given by \cref{eqa:q_prime}, where ${\sigma^\prime}^2$, $\mu^\prime$ and $\epsilon_t^{\prime\prime}$ are given by and \cref{eqa:sigma_prime}, \cref{eqa:mu_prime} and \cref{eqa:final_epsilon_1}:

\begin{equation}\label{eqa:q_prime}
    q({x_{t-1}}^\prime | {x_t}^\prime,{x_0}^\prime) \sim N({x_t}^\prime;\mu^\prime,{\sigma^\prime}^2),
\end{equation}

\begin{equation}\label{eqa:sigma_prime}
    {\sigma^\prime}^2 = \gamma^2 \cdot \frac{(1-\alpha_t)(1-\bar{\alpha}_{t-1})}{1-\bar{\alpha}_t},
\end{equation}

\begin{equation}\label{eqa:mu_prime}
    \mu^\prime = \frac{1}{\sqrt{\alpha_t}} {x_t}^\prime - \frac{1-\alpha_t}{\sqrt{1-\bar{\alpha}_t}\sqrt{\alpha_t}} \cdot {\epsilon_t}^{\prime\prime},
\end{equation}

\begin{equation}\label{eqa:final_epsilon_1}
    {\epsilon_t}^{\prime\prime} = \gamma {\epsilon_t}^\prime + (1-\gamma)\frac{\sqrt{1-\bar{\alpha}_t}}{1-\alpha_t} \cdot (b_t - \sqrt{\alpha_t}b_{t-1}).
\end{equation}
\end{theorem}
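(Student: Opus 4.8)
The plan is to avoid re-deriving a Gaussian posterior from scratch and instead exploit that \cref{eqa:xt_prime} defines ${x_t}'$ as a deterministic affine image of the ordinary DDPM iterate $x_t$, so that the whole statement follows from the already-known posterior $q(x_{t-1}\mid x_t,x_0)\sim N(\mu,\sigma^2)$ of \cref{eqa:ddpm_sigma} and \cref{eqa:ddpm_mu} by a change of variables. First I would fix the watermark sequence $\{b_t\}$ and treat it as deterministic (independent of the forward-process noise), so that for each $t$ the map $x_t\mapsto {x_t}'=\gamma x_t+(1-\gamma)b_t$ is an invertible affine bijection whenever $\gamma\in(0,1]$. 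Because this map is a bijection, conditioning on the pair $({x_t}',{x_0}')$ carries exactly the same information as conditioning on $(x_t,x_0)$, and hence $q({x_{t-1}}'\mid {x_t}',{x_0}')$ is simply the pushforward of $q(x_{t-1}\mid x_t,x_0)$ under $x_{t-1}\mapsto \gamma x_{t-1}+(1-\gamma)b_{t-1}$.

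Next I would apply the elementary rule that an affine image $\gamma X+c$ of a Gaussian $X\sim N(\mu,\sigma^2)$ is again Gaussian with mean $\gamma\mu+c$ and variance $\gamma^2\sigma^2$. Taking $c=(1-\gamma)b_{t-1}$ immediately yields the variance ${\sigma'}^2=\gamma^2\sigma^2$ claimed in \cref{eqa:sigma_prime}, together with a provisional mean $\mu'=\gamma\mu+(1-\gamma)b_{t-1}$. This establishes the Gaussian form of \cref{eqa:q_prime} up to identifying this mean with the expression in \cref{eqa:mu_prime}.

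The remaining work is purely algebraic. I would substitute $\mu$ from \cref{eqa:ddpm_mu} into $\gamma\mu+(1-\gamma)b_{t-1}$ and eliminate $x_t$ via $\gamma x_t={x_t}'-(1-\gamma)b_t$, so the leading term becomes ${x_t}'/\sqrt{\alpha_t}$ exactly as in \cref{eqa:mu_prime}. Collecting everything that multiplies the factor $\tfrac{1-\alpha_t}{\sqrt{1-\bar{\alpha}_t}\sqrt{\alpha_t}}$ should then reproduce the effective noise ${\epsilon_t}''$ of \cref{eqa:final_epsilon_1}, namely $\gamma{\epsilon_t}'+(1-\gamma)\tfrac{\sqrt{1-\bar{\alpha}_t}}{1-\alpha_t}(b_t-\sqrt{\alpha_t}b_{t-1})$, where I identify the posterior noise with the sampling noise ${\epsilon_t}'$ used to build the original sequence.

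I expect the only genuinely delicate point to be the conditioning-equivalence step: I must argue that $\{b_t\}$ is fixed, so that the affine maps are deterministic and invertible, rather than random, since otherwise the pushforward would pick up extra variance from $b_t$ and $b_{t-1}$ and the clean reduction to $\gamma^2\sigma^2$ would fail. Everything after that is bookkeeping: tracking the $\gamma$ and $(1-\gamma)$ coefficients through the substitution and confirming that the $b_t,b_{t-1}$ terms assemble into the single combination $b_t-\sqrt{\alpha_t}b_{t-1}$ predicted by \cref{eqa:final_epsilon_1}.
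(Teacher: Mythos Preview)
Your proposal is correct and complete: the affine bijection $x_t\mapsto x_t'=\gamma x_t+(1-\gamma)b_t$ (invertible since $\gamma>0$ and $\{b_t\}$ is fixed) makes conditioning on $(x_t',x_0')$ equivalent to conditioning on $(x_t,x_0)$, so the posterior of $x_{t-1}'$ is exactly the affine pushforward of the known DDPM posterior; the variance picks up $\gamma^2$ and the mean algebra you sketch goes through verbatim. Your identification of the DDPM posterior noise with $\epsilon_t'$ is also sound, since $\epsilon_t'=(x_t-\sqrt{\bar\alpha_t}x_0)/\sqrt{1-\bar\alpha_t}$ coincides with the $\epsilon_t$ appearing in \cref{eqa:ddpm_mu}.

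The paper takes a different route. Rather than reducing to the known DDPM posterior via change of variables, it first derives the one-step and $t$-step recursions for the primed process directly,
\[
x_t'=\sqrt{\alpha_t}\,x_{t-1}'+\gamma\sqrt{1-\alpha_t}\,\epsilon+(1-\gamma)(b_t-\sqrt{\alpha_t}b_{t-1}),
\]
reads off the forward conditionals $q(x_t'\mid x_{t-1}',x_0')$ and $q(x_t'\mid x_0')$ as Gaussians with shifted means and variances scaled by $\gamma^2$, and then applies Bayes' rule $q(x_{t-1}'\mid x_t',x_0')\propto q(x_t'\mid x_{t-1}',x_0')\,q(x_{t-1}'\mid x_0')$ from scratch, repeating the standard DDPM completing-the-square computation. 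Your pushforward argument is shorter and conceptually cleaner because it treats the DDPM posterior as a black box and only transports it; the paper's approach is more self-contained and makes the recursive structure of $x_t'$ explicit, which is what it later uses in \cref{alg:wtmk_embedding} and in the simulation-stage formula \cref{eqa:xt_prime_2}.
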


\begin{proof}
As shown in \cref{app:A}.
\end{proof}

\begin{theorem}\label{theo:2}
For a specific $x_A$ and a static factor $K$, let 
\begin{equation}
    b_t = f_1(t)\cdot x_0 + f_2(t)\cdot x_A
\end{equation}
\begin{equation}
    f_1(t) = \sqrt{\bar{\alpha}_t}, f_1(0) = 1,  
\end{equation}
\begin{equation}
    f_2(t) = \sqrt{\bar{\alpha}_t} \cdot \sum{\frac{h(t)}{\sqrt{\bar{\alpha}_t}}}, h(t) = \frac{1-\alpha_t}{\sqrt{1-\bar{\alpha}_t}} \cdot K,
\end{equation}
according to \cref{theo:1}: $\epsilon_t^{\prime\prime}$ is given by \cref{eqa:final_epsilon_2}.
\begin{equation}\label{eqa:final_epsilon_2}
    {\epsilon_t}^{\prime\prime} = \gamma {\epsilon_t}^\prime + (1-\gamma)\cdot K \cdot x_A
\end{equation}

\end{theorem}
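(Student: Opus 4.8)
The plan is to substitute the explicit form of $b_t$ given in the statement directly into the general expression for $\epsilon_t^{\prime\prime}$ supplied by \cref{theo:1}, namely \cref{eqa:final_epsilon_1}, and then simplify the bracketed quantity $b_t - \sqrt{\alpha_t}\,b_{t-1}$. Since $b_t = f_1(t)\,x_0 + f_2(t)\,x_A$ is an affine combination of $x_0$ and $x_A$, I would split the computation into the $x_0$-coefficient and the $x_A$-coefficient and handle them separately, relying throughout on the multiplicative identity $\bar{\alpha}_t = \prod_{i=1}^t \alpha_i = \alpha_t\bar{\alpha}_{t-1}$, hence $\sqrt{\alpha_t\bar{\alpha}_{t-1}} = \sqrt{\bar{\alpha}_t}$.

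First I would treat the $x_0$ part. Its coefficient in $b_t - \sqrt{\alpha_t}\,b_{t-1}$ is $f_1(t) - \sqrt{\alpha_t}\,f_1(t-1) = \sqrt{\bar{\alpha}_t} - \sqrt{\alpha_t}\sqrt{\bar{\alpha}_{t-1}}$, which by the identity above collapses to $\sqrt{\bar{\alpha}_t} - \sqrt{\bar{\alpha}_t} = 0$. This is precisely why $f_1(t)$ was chosen to equal $\sqrt{\bar{\alpha}_t}$: the dependence on $x_0$ is annihilated, so the learned noise $\epsilon_t^{\prime\prime}$ carries no residual information about the clean sample, consistent with the paper's fidelity claim. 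Next I would handle the $x_A$ part. Writing $f_2(t) = \sqrt{\bar{\alpha}_t}\,g(t)$ with $g(t) = \sum_{i=1}^t h(i)/\sqrt{\bar{\alpha}_i}$ the cumulative sum in the definition, the same identity reduces $f_2(t) - \sqrt{\alpha_t}\,f_2(t-1)$ to $\sqrt{\bar{\alpha}_t}\,\bigl(g(t) - g(t-1)\bigr)$, which telescopes to $\sqrt{\bar{\alpha}_t}\cdot h(t)/\sqrt{\bar{\alpha}_t} = h(t)$.

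Finally I would assemble the two pieces. Having shown $b_t - \sqrt{\alpha_t}\,b_{t-1} = h(t)\,x_A$, I multiply by the prefactor $\sqrt{1-\bar{\alpha}_t}/(1-\alpha_t)$ from \cref{eqa:final_epsilon_1} and insert $h(t) = (1-\alpha_t)K/\sqrt{1-\bar{\alpha}_t}$; the factors $\sqrt{1-\bar{\alpha}_t}$ and $(1-\alpha_t)$ cancel, leaving exactly $K$. Thus the entire second term of \cref{eqa:final_epsilon_1} collapses to $(1-\gamma)\,K\,x_A$, and together with the untouched $\gamma\,\epsilon_t^\prime$ term this yields \cref{eqa:final_epsilon_2}.

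The computation is essentially a guided substitution, so the main obstacle is interpretive rather than technical: one must correctly read the ambiguous $\sum$ in the definition of $f_2$ as a cumulative sum over the timestep index and verify that $f_2$ genuinely satisfies the first-order recurrence $f_2(t) - \sqrt{\alpha_t}\,f_2(t-1) = h(t)$ with an initial condition compatible with $f_1(0) = 1$. Once that telescoping structure is pinned down, everything else reduces to routine algebra depending only on the multiplicative property of $\bar{\alpha}_t$ and the cancellation of the prefactor against $h(t)$.
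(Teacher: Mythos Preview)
Your proposal is correct and follows exactly the approach the paper indicates: the paper's own proof reads, in its entirety, ``Through substitution and derivation, the theorem can be proved,'' and what you have written is precisely that substitution and derivation carried out in detail. Your interpretation of the cumulative sum in $f_2$ and the telescoping of $f_2(t)-\sqrt{\alpha_t}\,f_2(t-1)$ down to $h(t)$ are the intended mechanisms, and the cancellations you identify are all sound.
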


\begin{proof}
Through substitution and derivation, the theorem can be proved.
\end{proof}

\subsection{Watermark Embedding}

We aim to embed the watermark into the internal diffusion process by modifying the training process. The embedding way need to make sure that the watermark can only be detected in the internal generation process and disappear in the final generations, so that the quality of the generation samples can be maintained.
Our watermark embedding algorithm with training process is demonstrated in \cref{alg:wtmk_embedding}.

\begin{algorithm}[tb]
    \caption{Training process with watermark embedding}
    \label{alg:wtmk_embedding}
    \textbf{Input}: $\bm{\theta}$ (model weights), $\bm{x_0}$ (original data sample), $\bm{t}$ (current time-step)\\
    \textbf{Parameter}:  $\bm{\alpha_t},\bm{\gamma},\bm{K}$, watermark:$\bm{x_A}$, watermark time-step:$\bm{t_A}$\\
    \textbf{Output}: Trained $\bm{\theta}$
    \begin{algorithmic}[1]
        \STATE Calculate $f_1(t)$, $f_2(t)$ by $\alpha_t,K$
        \STATE Calculate cumulative product of $\alpha_t$: $\bar{\alpha}_t$
        \STATE $\epsilon_t^\prime \gets$ noise sampled from $N(0,1)$ 
        \IF{$t<=t_A$} 
            \STATE $b_t \gets f_1(t) * x_0 + f_2(t) * x_A $
            \STATE $x_t^\prime \gets \sqrt{\bar{\alpha}_t} * x_0 + \gamma * \sqrt{1-\bar{\alpha}_t} * \epsilon_t^\prime + (1-\gamma)*(b_t - \sqrt{\bar{\alpha}_t} * b_0)$
            \STATE $\epsilon_t^{\prime\prime} \gets \gamma * \epsilon_t^\prime + (1-\gamma) * K * x_A$ 
        \ELSE           
            \STATE Get $b_{t_A},x_{t_A}^\prime$ following step 6,7
            \STATE $x_t^\prime \gets \sqrt{\bar{\alpha}_t / \bar{\alpha}_{t_A}} * x_{t_A}^\prime + \sqrt{1-\bar{\alpha}_t / \bar{\alpha}_{t_A}} * \epsilon_t^\prime$
            \STATE $\epsilon_t^{\prime\prime} \gets \epsilon_t^{\prime}$
        \ENDIF
        \STATE $\epsilon_t^{pred} \gets DM(x_t^\prime,t)$
        \STATE $loss(\theta) \gets LossFunction(\theta,MSE(\epsilon_t^{pred},\epsilon_t^{\prime\prime}))$
        \STATE $\theta \gets \theta - \eta \cdot \nabla loss(\theta)$
    \end{algorithmic}
\end{algorithm}

For a data sample $x_0$, the original training process of the DM firstly needs to conduct the noised sequence of a data sample $x_t$, and then train the model with timestep $t$, sequence $x_t$ and the added noise $\epsilon_t$. In our design, we need to conduct a new noised sequence ${x_t}^\prime$ with watermark information embedded so that we may rebuild the watermark in the sampling process. To achieve our goals, our training process can be divided into two parts, before and after the watermark step $t_{A}$, namely the embedding and simulation stage. In the embedding stage, we conduct the noised sequence that embedded the watermark information, as to the simulation stage, we only need to conduct the sequence following the original noising step so that we can make sure the final noised sampling will also obey the Gaussian distribution, which means that we can sample from the Gaussian distribution in the same way as the original denoising process to start the data sample reconstruction process.

Next, we will focus on the construction method of the first part of the noised sequence. Firstly, we propose to let a part of $x_0$ keep the original process and the other partial will gradually transform into the watermark $x_A$, so that we give ${x_t}^\prime$ by \cref{eqa:xt_prime} where $\gamma \in (0,1]$ is a hyper-parameter. Secondly, we need $b_t$ transform from $x_0$ to $x_A$ gradually, through the above design in \cref{theo:2}, we can get a simpler form of $\epsilon_t^{\prime\prime}$ as \cref{eqa:final_epsilon_2} to achieve our goals. In the above,${\epsilon_t}^\prime$ is the noise that we used in the noising process, while ${\epsilon_t}^{\prime\prime}$ is what we need the model to learn in the training step so that we can make sure the mean that we need in the sampling step has the same form as the original. Especially, we set 
\begin{equation}
    K = 1/\max{\{\sqrt{\bar{\alpha}_t} \cdot \sum{\frac{1-\alpha_t}{\sqrt{\bar{\alpha}_t}\sqrt{1-\bar{\alpha}_t}}}\}}
\end{equation} as scaling hyper-parameter to make sure $f_2(t)$ transforms into the range from $0$ to $1$. 
Theoretically, the form of $\epsilon^{\prime\prime}$ can be arbitrarily set. However, in practice, if the factor $x_0$ is present in the final form, data from different training batches can hinder the convergence of DM. 

\subsection{Watermark Verification} \label{sec:wtmk_extraction}
Different from traditional watermark patterns with fixed pixels, our approach embodies a literal \textit{water} mark. We integrate the watermark pattern into the noise-like internal samples by employing an appropriate $\gamma$. We aim to balance the ease of watermark extraction and verification with the convergence stability during model training. In our embedding approach, pixel-values in the expected watermarked region are generated by aggregation of the original image values and the watermark pixel values. It means the final pixel-values in the watermarked region are not fixed values, so we cannot verify the watermark by simply comparing the values of the target image with the preset watermark. Besides, changes in the range of pixel values of the internal samples necessitate scaling the pixel values of the watermark to ensure that only the watermark region is affected. That means verifying the color information of the watermark requires more complex processes. However, by focusing on the contours of the watermark, the implementation becomes much simpler and more practical.
 
The reproduction of watermarks is carried out through the denoising process of DM, in which a series of noises are applied. That is to say, each intermediate image in the generating process has randomness because of the noise influence. Hence, its verification needs to be implemented based on statistical methods.

Based on the above theories and discussions, the watermarked model generates predictions of $\epsilon^{\prime\prime}$ instead of the $\epsilon^{\prime}$ utilized in the noising process of DM. For unconditional models, we initiate the data generation process by randomly sampling from a batch of Gaussian noise as $x_T^{reverse}$ and then start the original reverse iterations of DDPM with $\mu^\prime$ and ${\sigma^\prime}^2$ as described in \cref{theo:2}. Subsequently, we retain the batch of internal samples $x_{t_A}^{reverse}$ at the watermark timestep $t_A$. Treating $x_{avg} = mean\{x_{t_A}^{reverse}\}$ as the target image, we can easily implement watermark extraction and verification by detecting its contours and comparing them with the specific watermark contours. We use OpenCV library to extract contours from images and compute the similarity of contours \cite{opencv_library}. The pseudocode is illustrated in \cref{app:B}.

Notably, $\mu^\prime$ (in \cref{eqa:mu_prime}) and $\mu$ (in \cref{eqa:ddpm_mu}) share the same form, while ${\sigma^\prime}^2$ differs from $\sigma^2$ by merely a factor of ${\gamma^2}$. Considering that the value of $\gamma$ won't be set excessively small and the watermark information is embedded within the training of the U-Net, ideally, employing a sampling process identical to the original DDPM should not significantly affect the experimental results. Indeed, experiments have validated this assumption. 

\section{Experimental Setup}

\subsection{Datasets}
In this study, we use four datasets for training and evaluation: MNIST \cite{mnist}, FashionMNIST \cite{xiao2017fashion}, CIFAR-10 \cite{krizhevsky2009learning}, and CelebA \cite{liu2015faceattributes}. These datasets are chosen due to their diverse characteristics, which help in thoroughly assessing the effectiveness of our watermarking technique across different types of image data.

\subsection{Model Architecture and Parameters}
Since our technique is a generic method based on the diffusion process, we employ DDPM architecture in our experiments. The key parameters and settings are as follows:
\begin{itemize}
    \item \textbf{Base Model:} DDPM.
    \item \textbf{Network Architecture:} The U-Net \cite{ronneberger2015u} architecture is utilized for the noise prediction network in the DDPM.
    \item \textbf{Training Epochs:} We trained all models for 50K epochs start from zero for all datasets.
    \item \textbf{Diffusion Steps $T$:} The diffusion steps is set by 1000 for both forward and reverse diffusion process.
    \item \textbf{Parameter $\gamma$:} 0.8 for all watermarked models.
    \item \textbf{Exponential Moving Average (EMA) rate:} This parameter is unused in MNIST and FashionMNIST, and is set by 0.999 for CIFAR and CelebA. 
\end{itemize}

\subsection{Evaluation Metrics}
To evaluate the effectiveness of the watermarking technique, we consider the following metrics:
\begin{itemize}
    \item \textbf{Inception Score (IS)\cite{salimans2016improved}:} It is used to evaluate the diversity and quality of generated images, with a higher score indicating greater diversity in the generated images.
    \item \textbf{Fréchet Inception Distance (FID) \cite{heusel2017gans}:} It measures the similarity between the distributions of generated data and real data. It calculates the distance between the feature representations of generated samples and real samples in a pre-trained deep convolutional neural network. A lower FID indicates a higher similarity between the generated data and the real data distribution.
    \item \textbf{sFID\cite{nash2021generating}:}It is a variant of FID that uses spatial features instead of the standard pooled features. This approach captures more detailed spatial information in the images, providing a potentially more accurate evaluation of the quality and diversity of the generated images.
    \item \textbf{Precision and Recall:} Precision measures the quality of the generated images, which is the proportion of generated images that overlap with the real image distribution. Recall measures the diversity of the generated images, which is the proportion of the real image distribution covered by the generated images.
\end{itemize}
For each dataset, we randomly choose 20k samples using 5 different random seeds, and then we generate batches with 20k samples by each model, all metrics are calculated in average of the 5 results. We calculate the average of 100 samples per model to process watermark extracting and verifying.

\subsection{Hardware Setup}
The experiments were conducted using a single NVIDIA A30 GPU to train the diffusion model. The use of a GPU accelerates the training process and allows for efficient computation of large-scale models and datasets.

\section{Experimental Results}

In our experiments, since we need to calculate each $x_{t_A}^\prime$ to generate $x_{t}^\prime$ for $t > t_A$, which is different from the baseline model, we conduct zero watermarked (0-wtmk) models following our training process as a second baseline for more reasonable evaluation. Considering the model convergence, we set $f_1(t) = 0$ for all the experiments, as a result, we learn $\gamma x_0$ in the watermarked process so that the final samples need to be divided by $\gamma$ for both zero and watermarked models. More details about this setting are discussed in \cref{app:C}.

\subsection{Different Watermarks}
We present different watermark positions (center and bottom right) and shapes (square, `x', and `+') for watermarking DM on MNIST, as shown in \cref{fig:different_watermarks}. Results show that watermarks in the center position cause lower model performance than those in the bottom right. That is because the lower influence on the main part of the original data leads to better model convergence. \cref{tab:different_watermarks} shows the experimental results on MNIST, we choose the square watermark at the bottom right as the basic watermark for comparison with the baseline. \textit{Italic} values represent the worst among the baseline, zero, and the basic watermarked models, while \underline{underline}s represent the best. The results illustrate that an appropriate position for watermarks is more important than a suitable shape.

\begin{figure}[t]
\centering
\includegraphics[width = \linewidth]{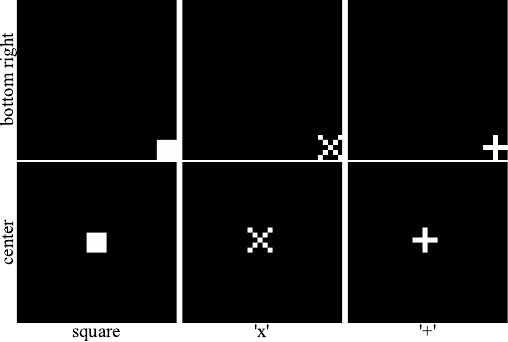}
\caption{Different watermark settings (shape \& position).} \label{fig:different_watermarks}
\end{figure}

\begin{table}[t]
\centering
\setlength{\tabcolsep}{0.9mm} 
\begin{tabular}{ccccccc}
\hline
\multicolumn{2}{c}{Settings} & \multicolumn{5}{c}{Metrics}\\ \hline Position& Shape & IS↑ & FID↓ & sFID↓ & Precision↑ & Recall↑ \\ 
\hline
\multicolumn{2}{c}{\textbf{baseline}} & \underline{\textbf{2.04}} & \textit{\textbf{3.64}} & \textit{\textbf{6.63}} & \underline{\textbf{0.78}} & \textit{\textbf{0.79}} \\
\multicolumn{2}{c}{\textbf{0-wtmk}} & \textbf{2.03} & \underline{\textbf{3.24}} & \textbf{3.84} & \textit{\textbf{0.67}} & \underline{\textbf{0.86}} \\
 & \textbf{square} & \textit{\textbf{2.02}} & \textbf{3.49} & \underline{\textbf{3.46}} & \textbf{0.67} & \textbf{0.84} \\
 & ‘+’ & 2.02 & 3.11 & 3.51 & 0.69 & 0.84 \\
\multirow{-3}{*}{\textbf{\begin{tabular}[c]{@{}c@{}}bottom\\ right\end{tabular}}} & ‘x’ & 2.01 & 3.40 & 3.86 & 0.66 & 0.85 \\
 & square & 2.02 & 5.54 & 4.71 & 0.66 & 0.83 \\
 & ‘+’ & 1.99 & 4.63 & 4.24 & 0.65 & 0.85 \\
\multirow{-3}{*}{center} & ‘x’ & 1.97 & 8.01 & 7.32 & 0.65 & 0.85 \\ 
\hline
\end{tabular}
\caption{Model performance under MNIST with different watermark settings.}
\label{tab:different_watermarks}
\end{table}

\subsection{Different Watermark Timesteps}
We continue experiments on FashionMNIST and find out that the watermark timestep ($t_A$) matters when the dataset becomes complex. An inappropriate timing to embed the full watermark into the diffusion process may cause bias in the distribution of the final generations, as shown in \cref{tab:different_timesteps}. When $t_A = (1/2) T$, IS and Precision scores show the diversity and quality of the final samples suffer a little. However, FID and sFID reach extremely high scores, which means low similarity in the distributions between final generations and true data. The most important difference in the training process between the baseline model and zero watermarked model is that diffused samples are generated based on $t_A$ in the latter instead of the original samples when $t>t_A$. The comparison between the performance on zero and basic watermarked models reveals the above is not caused by the watermark but by $t_A$. We believe that for datasets with a wide variety of categories and low similarity between categories, the fuzzy and incomplete image information in $x_{t_A}$ can lead to large distribution shifts in the model sampling process, details are shown in \cref{app:D}. Thus, we set $t_A = (3/4) T$ for FashionMNIST as the basic watermark setting, and the results have been significantly improved.

\begin{table}[t]
\setlength{\tabcolsep}{1mm}
\begin{tabular}{ccccccc}
\hline
\multicolumn{2}{c}{Settings} & \multicolumn{5}{c}{Metrics} \\
\hline
\begin{tabular}[c]{@{}c@{}}$t_A$\end{tabular} & Model & IS↑ & FID↓ & sFID↓ & Precision↑ & Recall↑ \\
\hline
\multicolumn{2}{c}{\textbf{baseline}} & \underline{\textbf{4.48}} & \textbf{7.59} & \textbf{9.76} & \underline{\textbf{0.75}} & \textit{\textbf{0.68}} \\
 & \textbf{0-wtmk} & \textit{\textbf{4.31}} & \textit{\textbf{8.40}} & \textit{\textbf{14.01}} & \textit{\textbf{0.69}} & \underline{\textbf{0.73}} \\
\multirow{-2}{*}{\textbf{3/4 T}} & \textbf{wtmked} & \textbf{4.34} & \underline{\textbf{5.03}} & \underline{\textbf{9.04}}  & \textbf{0.70} & \textbf{0.70} \\
 & 0-wtmk & 4.59 & 20.95 & 21.94 & 0.63 & 0.73 \\
\multirow{-2}{*}{1/2 T} & wtmked & 4.67 & 26.53 & 23.08 & 0.65 & 0.72 \\
\hline
\end{tabular}
\caption{Model performance under FashionMNIST with different watermark timestep ($t_A$) settings.}
\label{tab:different_timesteps}
\end{table}

\subsection{Overall Experiments}
In the overall experiments, we set $t_A = (1/2) T$ for MNIST and CelebA, $(3/4) T$ for FashionMNIST and CIFAR. The watermark position is set as bottom right for all datasets. Particularly, we set a green `x' watermark for CIFAR and CelebA, instead of the white square for MNIST and FashionMNIST. For the colored datasets, we implement one-channel and simpler shape watermarks so that the models can learn the watermarks more accurately. Otherwise, the model will need more steps to recover the watermark in the sampling process, which will cause the watermark can not vanish perfectly in the end of sampling process. 

\begin{table}[t]
\setlength{\tabcolsep}{0.8mm}
\begin{tabular}{ccccccc}
\hline
                           &                            & \multicolumn{5}{c}{Metrics}                                                                                                                           \\ \cline{3-7} 
\multirow{-2}{*}{Datasets} & \multirow{-2}{*}{Settings} & IS↑                         & FID↓                         & sFID↓                        & Precision↑                  & Recall↑                     \\ \hline
                           & baseline                   & \underline{2.04} & \textit{3.64}  & \textit{6.63}  & \underline{0.78} & \textit{0.79} \\
                           & 0-wtmk                     & 2.03                        & \underline{3.24}  & 3.84                         & \textit{0.67} & \underline{0.86} \\
\multirow{-3}{*}{MNIST}    & wtmked                     & \textit{2.02} & 3.49                         & \underline{3.46}  & 0.67                        & 0.84                        \\ \cline{2-7} 
                           & baseline                   & \underline{4.48} & 7.59                         & 9.76                         & \underline{0.75} & \textit{0.68} \\
                           & 0-wtmk                     & \textit{4.31} & \textit{8.40}  & \textit{14.01} & \textit{0.69} & \underline{0.73} \\
\multirow{-3}{*}{Fashion}  & wtmked                     & 4.34                        & \underline{5.03}  & \underline{9.04}  & 0.70                        & 0.70                        \\ \cline{2-7} 
                          & baseline                   & \underline{8.10} & \underline{11.90} & \textit{10.38} & \underline{0.65} & \textit{0.56} \\
                           & 0-wtmk                     &  8.01 & \textit{12.73} & 10.00                        & \textit{0.62} & \underline{0.57} \\
\multirow{-3}{*}{CIFAR}    & wtmked                     & \textit{7.96} & 12.51                        & \underline{9.89}  & 0.62                        & 0.57                        \\ \cline{2-7}  
                           & baseline                   & \underline{3.19} & \textit{14.17} & \textit{12.56} & \textit{0.61} & \underline{0.64} \\
                           & 0-wtmk                     & 3.10                        & 9.97                         & 10.92                        & 0.62                        & 0.63                        \\
\multirow{-3}{*}{CelebA}   & wtmked                     & \textit{3.08} & \underline{9.47}  & \underline{10.56} & \underline{0.63} & \textit{0.62} \\ \hline
\end{tabular}
\caption{Model performance under different datasets with different model settings.}
\label{tab:overall_performance}
\end{table}

The appearance and disappearance process of the watermark can be observed in \cref{fig:sampling_4datasets}, and the comparisons of the generations between baseline and watermarked models are shown in \cref{fig:comparison}. These visual comparisons highlight the success of our watermarking technique, as the results indicate that our method achieves its intended goals without compromising the model's output quality. The detailed performance metrics of the models across all datasets are summarized in \cref{tab:overall_performance}. Notably, key indicators such as IS, Precision and Recall show minimal variation after the watermarking process, signifying that our approach has a negligible impact on these aspects of model performance. 

Interestingly, both FID and its variant sFID show significant improvements following the introduction of the watermark. Comparing with baseline, the FID score of generations generated by our watermarked model is reduced by 33.7\% under FashionMNIST, and 33.2\% under CelebA. In the case of MNIST, the sFID score is reduced by 47.8\%. That means our watermark does not reduce the quality of the generations comparing with baseline, but rather improve it. This is mainly because the two-stage process (embedding and simulation) we designed contributes to better model convergence. Furthermore, when comparing our watermarked models to those with a zero watermarked setting, we observe minimal differences across all evaluation metrics. This shows that embedding watermark information does not significantly impact model performance, which is consistent with our theory.

In summary, the experimental results strongly validate both the correctness and the effectiveness of our watermarking theory. The unaffected model performance, along with the successful embedding of the watermark in the intermediate diffusion process and its disappearance in the final generated samples, confirms that our method is a powerful and reliable solution for intellectual property protection in diffusion models.


\begin{figure*}[t]
\centering
\includegraphics[width=\textwidth]{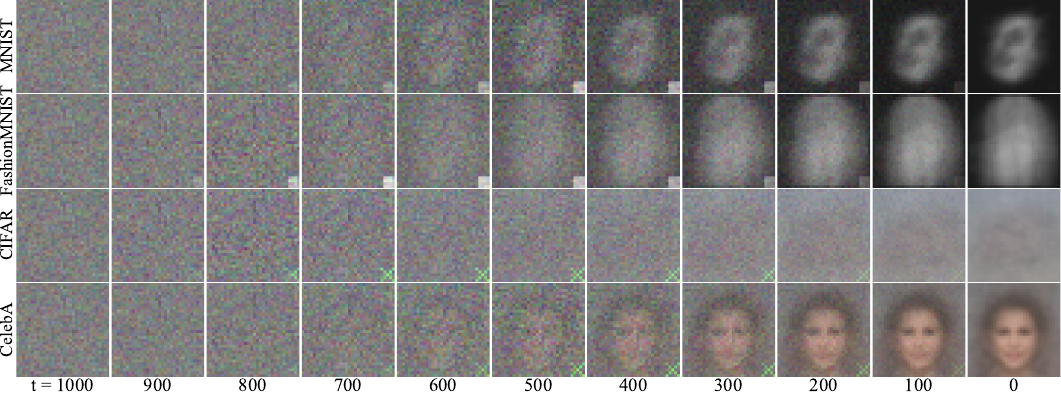}
\caption{Watermarked sampling process with average results of 100 samples among different timesteps t performed on different datasets.} \label{fig:sampling_4datasets}
\end{figure*}

\begin{figure*}[t]
\centering
\includegraphics[width = \textwidth]{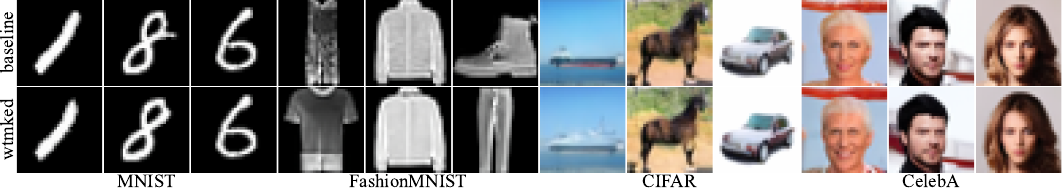}
\caption{Comparison between samples generated by baseline models and watermarked models.} \label{fig:comparison}
\end{figure*}

\section{Discussion}

\subsection{Invisibility}

Many watermarking approaches emphasize the concealment of the watermark, aiming for it to trigger only under specific conditions or to be imperceptible to humans. The former kind of strategies seeks to enhance the uniqueness guarantee of the watermark by ensuring its secrecy and avoiding elimination by some backdoor defense techniques. This is because backdoor-based watermarking process essentially involves training a diffusion model on an additional, independent, and simpler watermark task alongside the main task. Typically, this implementation requires a significant disparity between the watermark task and the main task from the model's perspective, resulting in the model weights for the watermark task often being distinct from those for the main task. The latter approach endeavors to make the watermark information in the final samples invisible to the human eye to maintain the quality of the generated samples. However, our work is based on a novel timing for watermark embedding where the watermark does not appear in the final generated samples. Unlike backdoor-based watermarking, we deeply integrate the watermark task into the main task of the diffusion model by modifying the model's process, thus eliminating the necessity for concealment of the watermark in our approach. Nonetheless, our proposed watermark embedding algorithm ensures its theoretical compatibility with various techniques, including triggering through triggers and cryptographic encryption, when necessary, to safeguard the uniqueness of the watermark.

\subsection{Limitations}

Firstly, we validated our theory using the foundational DDPM model. However, the integration of more advanced models into practical applications need to be take into consideration. Given their inherently complex model structures, it is essential to investigate whether our approach can achieve desirable convergence and expected experimental outcomes in real-world scenarios. Secondly, since our method requires observing watermarks within the internal process of model generation, acquiring model weights during practical implementation becomes necessary. This requirement may be more readily met in scenarios involving open-source platforms in practical applications. In contrast, proprietary or closed-source environments may present additional obstacles, potentially limiting the practical deployment of our method in certain contexts. 

\section{Conclusion}

In this paper, we propose a novel watermarking scheme for IP protection in diffusion models by embedding watermarks into the intermediate diffusion process, including the stages of watermark embedding and verification. The watermark is embedded into the internal diffusion process and combined deeply with the main task of the diffusion model. We theoretically ensures that our watermark embedding and verification neither require trigger conditions nor affect the final generations. We not only provide detailed theoretical analysis but also perform extensive experiments and achieve ideal results. Overall, considering the model performance and the successful appearance and disappearance of the watermark, our work meets the desired objectives. Introducing this new approach offers a novel and powerful technique for protecting intellectual property in diffusion models.

{\small
\bibliographystyle{ieee_fullname}
\bibliography{reference}

\begin{thebibliography}{10}\itemsep=-1pt

\bibitem{adi2018turning}
Yossi Adi, Carsten Baum, Moustapha Cisse, Benny Pinkas, and Joseph Keshet.
\newblock Turning your weakness into a strength: Watermarking deep neural networks by backdooring.
\newblock In {\em 27th USENIX Security Symposium (USENIX Security 18)}, pages 1615--1631, 2018.

\bibitem{aiken2021neural}
William Aiken, Hyoungshick Kim, Simon Woo, and Jungwoo Ryoo.
\newblock Neural network laundering: Removing black-box backdoor watermarks from deep neural networks.
\newblock {\em Computers \& Security}, 106:102277, 2021.

\bibitem{chen2019deepmarks}
Huili Chen, Bita~Darvish Rouhani, Cheng Fu, Jishen Zhao, and Farinaz Koushanfar.
\newblock Deepmarks: A secure fingerprinting framework for digital rights management of deep learning models.
\newblock In {\em Proceedings of the 2019 on International Conference on Multimedia Retrieval}, pages 105--113, 2019.

\bibitem{chen2023trojdiff}
Weixin Chen, Dawn Song, and Bo Li.
\newblock Trojdiff: Trojan attacks on diffusion models with diverse targets.
\newblock In {\em Proceedings of the IEEE/CVF Conference on Computer Vision and Pattern Recognition}, pages 4035--4044, 2023.

\bibitem{chou2023backdoor}
Sheng-Yen Chou, Pin-Yu Chen, and Tsung-Yi Ho.
\newblock How to backdoor diffusion models?
\newblock In {\em Proceedings of the IEEE/CVF Conference on Computer Vision and Pattern Recognition}, pages 4015--4024, 2023.

\bibitem{desu2024generative}
Aditya Desu, Xuanli He, Qiongkai Xu, and Wei Lu.
\newblock Generative models are self-watermarked: Intellectual property declaration through re-generation, 2024.

\bibitem{fernandez2023stable}
Pierre Fernandez, Guillaume Couairon, Herv{\'e} J{\'e}gou, Matthijs Douze, and Teddy Furon.
\newblock The stable signature: Rooting watermarks in latent diffusion models.
\newblock In {\em Proceedings of the IEEE/CVF International Conference on Computer Vision}, pages 22466--22477, 2023.

\bibitem{goodfellow2020generative}
Ian Goodfellow, Jean Pouget-Abadie, Mehdi Mirza, Bing Xu, David Warde-Farley, Sherjil Ozair, Aaron Courville, and Yoshua Bengio.
\newblock Generative adversarial networks.
\newblock {\em Communications of the ACM}, 63(11):139--144, 2020.

\bibitem{guo2018watermarking}
Jia Guo and Miodrag Potkonjak.
\newblock Watermarking deep neural networks for embedded systems.
\newblock In {\em 2018 IEEE/ACM International Conference on Computer-Aided Design (ICCAD)}, pages 1--8. IEEE, 2018.

\bibitem{heusel2017gans}
Martin Heusel, Hubert Ramsauer, Thomas Unterthiner, Bernhard Nessler, and Sepp Hochreiter.
\newblock Gans trained by a two time-scale update rule converge to a local nash equilibrium.
\newblock {\em Advances in neural information processing systems}, 30, 2017.

\bibitem{ho2020denoising}
Jonathan Ho, Ajay Jain, and Pieter Abbeel.
\newblock Denoising diffusion probabilistic models.
\newblock {\em Advances in neural information processing systems}, 33:6840--6851, 2020.

\bibitem{kirchenbauer2023watermark}
John Kirchenbauer, Jonas Geiping, Yuxin Wen, Jonathan Katz, Ian Miers, and Tom Goldstein.
\newblock A watermark for large language models.
\newblock In {\em International Conference on Machine Learning}, pages 17061--17084. PMLR, 2023.

\bibitem{krizhevsky2009learning}
Alex Krizhevsky, Geoffrey Hinton, et~al.
\newblock Learning multiple layers of features from tiny images.
\newblock 2009.

\bibitem{mnist}
Yann LeCun, Corinna Cortes, and CJ Burges.
\newblock The mnist database of handwritten digits.
\newblock {\em http://yann.lecun.com/exdb/mnist/}, 1998.

\bibitem{lei2024diffusetrace}
Liangqi Lei, Keke Gai, Jing Yu, and Liehuang Zhu.
\newblock Diffusetrace: A transparent and flexible watermarking scheme for latent diffusion model.
\newblock {\em arXiv preprint arXiv:2405.02696}, 2024.

\bibitem{li2021survey}
Yue Li, Hongxia Wang, and Mauro Barni.
\newblock A survey of deep neural network watermarking techniques.
\newblock {\em Neurocomputing}, 461:171--193, 2021.

\bibitem{liu2023watermarking}
Yugeng Liu, Zheng Li, Michael Backes, Yun Shen, and Yang Zhang.
\newblock Watermarking diffusion model.
\newblock {\em arXiv preprint arXiv:2305.12502}, 2023.

\bibitem{liu2015faceattributes}
Ziwei Liu, Ping Luo, Xiaogang Wang, and Xiaoou Tang.
\newblock Deep learning face attributes in the wild.
\newblock In {\em Proceedings of International Conference on Computer Vision (ICCV)}, December 2015.

\bibitem{nagai2018digital}
Yuki Nagai, Yusuke Uchida, Shigeyuki Sakazawa, and Shin’ichi Satoh.
\newblock Digital watermarking for deep neural networks.
\newblock {\em International Journal of Multimedia Information Retrieval}, 7:3--16, 2018.

\bibitem{nash2021generating}
Charlie Nash, Jacob Menick, Sander Dieleman, and Peter~W Battaglia.
\newblock Generating images with sparse representations.
\newblock {\em arXiv preprint arXiv:2103.03841}, 2021.

\bibitem{ong2021protecting}
Ding~Sheng Ong, Chee~Seng Chan, Kam~Woh Ng, Lixin Fan, and Qiang Yang.
\newblock Protecting intellectual property of generative adversarial networks from ambiguity attacks.
\newblock In {\em Proceedings of the IEEE/CVF Conference on Computer Vision and Pattern Recognition}, pages 3630--3639, 2021.

\bibitem{opencv_library}
OpenCV.
\newblock {OpenCV} - open source computer vision library.
\newblock \url{https://opencv.org/}, 2021.

\bibitem{peng2023intellectual}
Sen Peng, Yufei Chen, Cong Wang, and Xiaohua Jia.
\newblock Intellectual property protection of diffusion models via the watermark diffusion process.
\newblock {\em arXiv preprint arXiv:2306.03436}, 2023.

\bibitem{qiao2023novel}
Tong Qiao, Yuyan Ma, Ning Zheng, Hanzhou Wu, Yanli Chen, Ming Xu, and Xiangyang Luo.
\newblock A novel model watermarking for protecting generative adversarial network.
\newblock {\em Computers \& Security}, 127:103102, 2023.

\bibitem{ramesh2022hierarchical}
Aditya Ramesh, Prafulla Dhariwal, Alex Nichol, Casey Chu, and Mark Chen.
\newblock Hierarchical text-conditional image generation with clip latents.
\newblock {\em arXiv preprint arXiv:2204.06125}, 1(2):3, 2022.

\bibitem{rombach2022high}
Robin Rombach, Andreas Blattmann, Dominik Lorenz, Patrick Esser, and Bj{\"o}rn Ommer.
\newblock High-resolution image synthesis with latent diffusion models.
\newblock In {\em Proceedings of the IEEE/CVF conference on computer vision and pattern recognition}, pages 10684--10695, 2022.

\bibitem{ronneberger2015u}
Olaf Ronneberger, Philipp Fischer, and Thomas Brox.
\newblock U-net: Convolutional networks for biomedical image segmentation.
\newblock In {\em Medical image computing and computer-assisted intervention--MICCAI 2015: 18th international conference, Munich, Germany, October 5-9, 2015, proceedings, part III 18}, pages 234--241. Springer, 2015.

\bibitem{rouhani2019deepsigns}
Bita~Darvish Rouhani, Huili Chen, and Farinaz Koushanfar.
\newblock Deepsigns: an end-to-end watermarking framework for protecting the ownership of deep neural networks.
\newblock In {\em ACM International Conference on Architectural Support for Programming Languages and Operating Systems}, volume~3, 2019.

\bibitem{saharia2022photorealistic}
Chitwan Saharia, William Chan, Saurabh Saxena, Lala Li, Jay Whang, Emily~L Denton, Kamyar Ghasemipour, Raphael Gontijo~Lopes, Burcu Karagol~Ayan, Tim Salimans, et~al.
\newblock Photorealistic text-to-image diffusion models with deep language understanding.
\newblock {\em Advances in Neural Information Processing Systems}, 35:36479--36494, 2022.

\bibitem{salimans2016improved}
Tim Salimans, Ian Goodfellow, Wojciech Zaremba, Vicki Cheung, Alec Radford, and Xi Chen.
\newblock Improved techniques for training gans.
\newblock {\em Advances in neural information processing systems}, 29, 2016.

\bibitem{song2020denoising}
Jiaming Song, Chenlin Meng, and Stefano Ermon.
\newblock Denoising diffusion implicit models.
\newblock {\em arXiv preprint arXiv:2010.02502}, 2020.

\bibitem{uchida2017embedding}
Yusuke Uchida, Yuki Nagai, Shigeyuki Sakazawa, and Shin'ichi Satoh.
\newblock Embedding watermarks into deep neural networks.
\newblock In {\em Proceedings of the 2017 ACM on international conference on multimedia retrieval}, pages 269--277, 2017.

\bibitem{wang2019neural}
Bolun Wang, Yuanshun Yao, Shawn Shan, Huiying Li, Bimal Viswanath, Haitao Zheng, and Ben~Y Zhao.
\newblock Neural cleanse: Identifying and mitigating backdoor attacks in neural networks.
\newblock In {\em 2019 IEEE Symposium on Security and Privacy (SP)}, pages 707--723. IEEE, 2019.

\bibitem{wen2024tree}
Yuxin Wen, John Kirchenbauer, Jonas Geiping, and Tom Goldstein.
\newblock Tree-rings watermarks: Invisible fingerprints for diffusion images.
\newblock {\em Advances in Neural Information Processing Systems}, 36, 2024.

\bibitem{xiao2017fashion}
Han Xiao, Kashif Rasul, and Roland Vollgraf.
\newblock Fashion-mnist: a novel image dataset for benchmarking machine learning algorithms.
\newblock {\em arXiv preprint arXiv:1708.07747}, 2017.

\bibitem{yang2023diffusion}
Xingyi Yang, Daquan Zhou, Jiashi Feng, and Xinchao Wang.
\newblock Diffusion probabilistic model made slim.
\newblock In {\em Proceedings of the IEEE/CVF Conference on Computer Vision and Pattern Recognition}, pages 22552--22562, 2023.

\bibitem{zhang2018protecting}
Jialong Zhang, Zhongshu Gu, Jiyong Jang, Hui Wu, Marc~Ph Stoecklin, Heqing Huang, and Ian Molloy.
\newblock Protecting intellectual property of deep neural networks with watermarking.
\newblock In {\em Proceedings of the 2018 on Asia conference on computer and communications security}, pages 159--172, 2018.

\end{thebibliography}
}

\clearpage
\appendix{} 
\section{Proof of Theorem 1}
\label{app:A}
\begin{proof}
By \cref{eqa:ddpm_noising} and \cref{eqa:xt_prime}, we can obtain the recursive formula of ${x_t}^\prime$ which is: $$ {x_t}^\prime = \sqrt{\alpha_t}{x_{t-1}}^\prime + \gamma\sqrt{1-\alpha_t}\epsilon + (1-\gamma)(b_t-\sqrt{\alpha_t}b_{t-1}),$$ and further $$ {x_t}^\prime = \sqrt{\bar{\alpha}_t}{x_{0}}^\prime + \gamma\sqrt{1-\bar{\alpha}_t}{\epsilon_t}^\prime+(1-\gamma)(b_t-\sqrt{\bar{\alpha}_t}b_0).$$
Then, the corresponding conditional probability that needs to be calculated during the denoising step is as follows: 
\small
\begin{align*}
q(&{x_t}^\prime | {x_{t-1}}^\prime, {x_0}^\prime) \\ &\sim N\left({x_t}^\prime; \sqrt{\alpha_t}{x_{t-1}}^\prime + (1-\gamma)(b_t - \sqrt{\alpha_t}b_{t-1}), \gamma^2(1-\alpha_t)\right), \\
q(&{x_t}^\prime | {x_0}^\prime) \\
&\sim N\left({x_t}^\prime; \sqrt{\bar{\alpha}_t}{x_0}^\prime + (1-\gamma)(b_t - \sqrt{\bar{\alpha}_t}b_0), \gamma^2(1-\bar{\alpha}_t)\right), \\
q(&{x_{t-1}}^\prime | {x_t}^\prime,{x_0}^\prime) = q\left({x_t}^\prime | {x_{t-1}}^\prime, {x_0}^\prime\right) 
\cdot \frac{q\left({x_{t-1}}^\prime | {x_0}^\prime\right)}{q\left({x_t}^\prime | {x_0}^\prime\right)}.
\end{align*}
\normalsize

Through substitution and derivation, the theorem can be proved.
\end{proof}

\section{Watermark Verification}
\label{app:B}
The pseudocode for watermark contour extraction and verification is shown in \cref{alg:wtmk_extracting}, \cref{fig:watermark_verification} shows an example of using our watermark verification method on MNIST.

\begin{algorithm}[!htbp]
\caption{Watermark Contour Extraction and Verification}
\label{alg:wtmk_extracting}
\textbf{Input}: $x_{avg}$ (target image), $x_A$ (watermark), threshold (similarity threshold), edgesconvert (boolean to apply edge detection)\\
\textbf{Output}: Boolean value
\begin{algorithmic}[1]
\STATE $x_{avg}, x_A \leftarrow Uint8Convert(x_{avg}, x_A)$
\STATE $x_{avg}, x_A \leftarrow GrayscaleConvert(x_{avg}, x_A)$
\STATE $x_{avg}, x_A \leftarrow BinaryThresholding(x_{avg}, x_A)$
\IF {edgesconvert is True}
    \STATE $x_{avg}, x_A \leftarrow GaussianBlur(x_{avg}, x_A)$
    \STATE $x_{avg}, x_A \leftarrow CannyEdgeDetection(x_{avg}, x_A)$
\ENDIF
\STATE $target_{cts}, pattern_{cts} \leftarrow FindContours(x_{avg}, x_A)$
\FOR {each $target_{ct}$ in $target_{cts}$}
    \FOR {each $pattern_{ct}$ in $pattern_{cts}$}
        \IF {$Similarity(target_{ct}, pattern_{ct})<$ threshold}
            \RETURN True
        \ENDIF
    \ENDFOR
\ENDFOR
\RETURN False
\end{algorithmic}
\end{algorithm}

\begin{figure}[t]
\centering
\includegraphics[width=\linewidth]{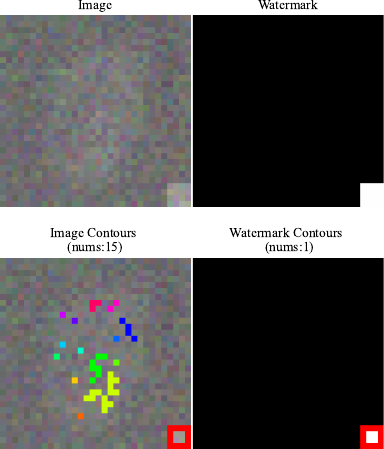}
\caption{Watermark verification.} \label{fig:watermark_verification}
\end{figure}

\section{More Detailed Settings}
\label{app:C}
Since our watermark embedding process is aimed at the intermediate diffusion process, more specific factors need to be taken into account during the training process.

\textbf{Dynamic scaling.} Firstly, to make sure the watermark $x_A$ can be shown clearly in the intermediate process, we need to set the maximum value of $x_A$ the same as maximum value of $x_t$. Even though the range of pixel values is from $-1$ to $1$ in the training process when $t=0$, it becomes larger as the timestep $t$ grows. This is caused by the noise term when we construct $x_t$ according to \cref{eqa:ddpm_noising}. That means we need to dynamically scale $x_A$ for each construction of ${x_t}^\prime$ in \cref{eqa:xt_prime}.

\textbf{$f_1(t)$ setting.}
In our theory, $f_1(t)$ and $f_2(t)$ can be any form as long as the former grows from $0$ to $1$, as $t$ from $0$ to $x_{t_A}$, and the latter opposite. However, we need to consider about the model convergence in the experiment. We are learning ${\epsilon_t}^{\prime\prime}$ as mentioned in \cref{eqa:final_epsilon_1}, if we cannot eliminate the existence of $x_0$ in the final formula, the model are difficult to converge because the various $x_0$ in each learning batch. We give one solution in \cref{theo:2}, the design for $f_1(t)$ makes sure the form of ${\epsilon_t}^{\prime\prime}$ is not affected by $x_0$, and the design for $f_2(t)$ makes ${\epsilon_t}^{\prime\prime}$ simpler. However, the experiment results under this settings are not ideal enough, that is mainly because $b_t$ is not an weighted average, as shown in \cref{fig:zero_f1}. More importantly, $x_A$ cannot have the same value range as $x_0$ since it need to be scaled dynamically. It means that the value range of ${x_t}^\prime $ in this solution is larger than that of $x_t$ in baseline, which directly affects the training performance.

Another solution is to set $f_1(t) = 0$, which means we are learning $\gamma x_0$ instead of $x_0$, so that we need to divide the model generation by $\gamma$ to get the final generation. This solution significantly improves the experimental results. We adopt this solution in our experiment. The experimental results of the above two solutions under MNIST is shown in \cref{tab:zero_f1}.

\begin{table}[t]
\setlength{\tabcolsep}{0.75mm} 
\begin{tabular}{ccccccc}
\hline
\multicolumn{2}{c}{Settings}                     & \multicolumn{5}{c}{Metrics}                                                        \\ \hline
Model                            & Zero f1       & IS↑            & FID↓           & sFID↓          & Precision↑     & Recall↑        \\ \hline
\multicolumn{2}{c}{\textbf{baseline}}            & \textbf{2.039} & \textbf{3.636} & \textbf{6.628} & \textbf{0.782} & \textbf{0.793} \\
\multicolumn{2}{c}{\textbf{zero wtmk}}           & \textbf{2.027} & \textbf{3.240} & \textbf{3.842} & \textbf{0.666} & \textbf{0.855} \\
\multirow{2}{*}{\textbf{wtmked}} & \textbf{True} & \textbf{2.015} & \textbf{3.493} & \textbf{3.460} & \textbf{0.671} & \textbf{0.843} \\
                                 & False         & 2.017          & 5.392          & 5.311          & 0.665          & 0.836          \\ \hline
\end{tabular}
\caption{Model performance under MNIST with different $f_1(t)$ settings.}
\label{tab:zero_f1}
\end{table}

\begin{figure}[t]
\centering
\includegraphics[width=\linewidth]{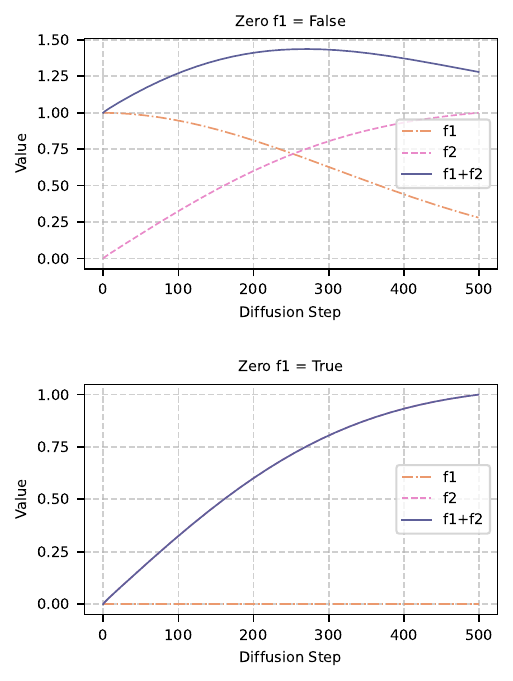}
\caption{The trend of parameters changing with diffusion step.} \label{fig:zero_f1}
\end{figure}

\section{Unsuitable Watermark Timestep}
\label{app:D}

As shown in \cref{fig:fashion_unsuitableTA}, when we set $t_A = (1/2)T$ for FashionMNIST, the models tend to restore the noise to the samples with more texture, even for the zero watermarked models. We also test different $\gamma$ settings under this situation. $\gamma = 1.0$ under zero watermarked model means we totally train the model same as baseline except the $t_A$ setting.
\begin{figure}[!htbp]
\centering
\includegraphics[width=\linewidth]{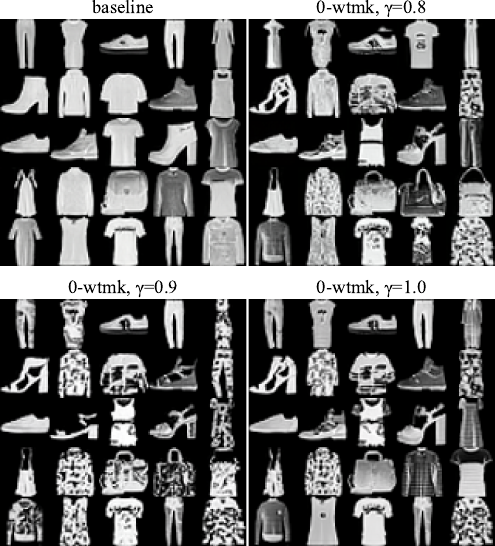}
\caption{Comparison of FashionMNIST samples generated by baseline model and zero watermarked models under different $\gamma$ settings with same $t_A = (1/2) T$.} \label{fig:fashion_unsuitableTA}
\end{figure}

Result reveals the large distribution shifts in the model sampling process is caused by unsuitable watermark timestep. When the dataset contains samples that differ greatly, early $t_A$ results in the model not being able to distinguish certain features of different samples very well.

\end{document}